\documentclass{article}

\PassOptionsToPackage{numbers, compress, sort}{natbib}

 \usepackage[preprint]{neurips_2026}
 \usepackage{macros}

\usepackage[utf8]{inputenc} 
\usepackage[T1]{fontenc}    
\usepackage{hyperref}       
\usepackage{url}            
\usepackage{booktabs}       
\usepackage{amsfonts}       
\usepackage{nicefrac}       
\usepackage{microtype}      
\usepackage{xcolor}         

\title{Smoothed Elicitation Complexity for Approximate $\Gamma$-calibration of Discrete Classification Tasks}

\author{%
  Jessica Finocchiaro \\
  Computer Science, Boston College \\
  \texttt{finocch@bc.edu} \\
  \And
  Victor Ganson \\
  Computer Science, Boston College \\
  \texttt{gansonv@bc.edu} \\
  \And
  Drona Khurana \\
  Computer Science, University of Colorado Boulder \\
  \texttt{drona.khurana@colorado.edu}
}

\begin{document}

\maketitle

\begin{abstract}
One prominent method of evaluating machine learning model trustworthiness is the notion of \emph{calibration}.
In the binary outcome setting, a probabilistic predictor is calibrated if outcomes are realized according to a model's distributional prediction, conditioned on this prediction.
Straightforward extensions of binary calibration definitions to probabilistic multiclass classifiers suffer from an exponential complexity blowup as the space of predictions grows exponentially in the number of classes $n$.
As a remedy, \citet{noarov_statistical_2023} propose multiclass calibration with predictions that are \emph{properties} of the outcome distribution, reducing complexity from growing in the number of classes $n$ to the \emph{dimension} $d$ of the property, called its elicitation complexity. 
Previous work on approximate property calibration is generally limited to continuous scalar properties, despite many relevant properties of interest being discrete, like the mode or rankings.
We characterize the approximate property calibration of discrete properties which are strongly orderable by using Lipschitz continuous properties as an intermediary.
This work is the first to our knowledge to provide approximate calibration results for discrete properties.
Along the way, we characterize the Lipschitz elicitation complexity of strongly orderable discrete properties by constructing algorithms for designing these Lipschitz properties, which we prove can be post-processed to obtain the original discrete property.
\end{abstract}

\section{Introduction}
One prominent paradigm of evaluating machine learning models examines \emph{calibration}, where a model's probabilistic predictions are roughly realized. 
That is, instead of conditioning on a feature instance $X = x$ (as desired in evaluating Bayes predictors), we condition on a model's prediction $f(x) = p$, and want to observe outcomes roughly aligned with $p$.
Intuitively, this is typically formalized as $\E[Y|f(X) = p] \approx p$ for all $p \in \im(f)$.
Empirically, continuous predictors must have their predictions \emph{binned} to obtain multiple instances of the same conditioning event.
The sample complexity of calibration is known to grow polynomially in the number of bins on the space~\citep{zhao_calibrating_2021,gopalan_computationally_2024,collina2026samplecomplexitymulticalibration}.

This poses a prohibitive challenge for \emph{multiclass} calibration: when bins represent approximate probabilistic predictions, the number of bins na\"ively grows exponentially in the number of classes, posing substantial computational and statistical challenges.
To circumvent this, many works on multiclass calibration reduce the granularity of the binned space~\citep{zhao_calibrating_2021,bairaktari2025sample,kull_beyond_2019,gopalan_computationally_2024,gopalan2022low,hu2025efficientswapmulticalibrationelicitable}.
These approaches generally combine full distributional predictions with some polynomial binning approach, and often lack decision-theoretic guarantees for the task implicitly guiding the polynomial binning.
\citet{derr2025threetypesofcalibration} outline a few possible approaches to defining multiclass calibration, consolidating many of these approaches to what they call \emph{distribution calibration with respect to $\gamma$}, where $\gamma : \simplex \to \R$ maps distributions to reports is called a \emph{property} guiding the binning approach.
In contrast, they show distribution calibration with respect to $\gamma$ is a stronger notion than an alternative called $\Gamma$-calibration--- initially introduced by \citet{noarov_statistical_2023,gneiting_regression_2023}, in which a model estimates the property $\gamma$ itself--- rather than a distribution--- while still providing decision-theoretic guarantees.

Critically, results surrounding approximate $\gamma$-calibration assume the prediction target $\gamma$ is \emph{continuous}.
These \emph{continuous} (calibrated) predictions are often used to make \emph{discrete} decisions.
This continuous-to-discrete jump often poses challenges for obtaining decision-theoretic guarantees, as demonstrated by \citet{noarov_statistical_2023}'s assumption of continuous properties and \citet{derr2025threetypesofcalibration} only giving exact calibration results for discrete properties.
From a technical perspective, it is difficult to obtain \emph{approximate} calibration guarantees for discrete prediction tasks since discrete predictions do not provide the opportunity to communicate a model's uncertainty, which is critical to understand if a model is ``close to'' calibration.
More precisely, this tension emerges from the need for a calibration error metric that is \emph{continuous} in the prediction space, which is impossible if predictions themselves are not continuous.

To bridge this gap, we propose using \emph{smoothed property elicitation} to obtain a continuous predictor of an intermediate continuous property $\Gamma$ which can be post-processed to obtain the desired discrete decision property $\gamma$. 
We can obtain approximate calibration guarantees with this smoothed property, while bounding the probability of miscalibration in the discrete space.
The \emph{dimension} of this smoothed property is an important object of study in its own right.
If the prediction dimension $d$ is significantly lower than predicting a full distribution over $n$ labels, this improves the computational complexity of gradient-based optimization algorithms.
Moreover, the number of bins evaluated in empirical estimation of miscalibration grows exponentially in the prediction dimension $d$.
Since sample complexity grows in the number of bins, a prediction dimension $d < n$ can significantly improve computational and statistical complexity. 

\Cref{fig:flowchart} outlines the organization of the paper.
In \Cref{sec:elic-char}, we first characterize the discrete properties that can be obtained by post-processing a continuous property $\Gamma : \simplex \to \reals$ in $1$ dimension.
Our characterization is constructive, providing two algorithms yielding smoothed properties. 
With this characterization, we proceed to obtain bounds on the approximate $\Gamma$-calibration of probabilistic predictors in \Cref{subsec:dist-Gamma}, then bound the approximate discrete $\gamma$-calibration in \Cref{subsec:Gamma-gamma} by using the smoothed property as an intermediary.

\begin{figure}
    \centering
\resizebox{\linewidth}{!}{\definecolor{purple}{RGB}{209, 120, 250}

\begin{tikzpicture}[
    node distance=1.5cm and 1cm,
    start chain=going right,
    block/.style={
        rectangle, 
        draw=black, 
        thick, 
        fill=purple!5,
        text centered, 
        rounded corners, 
        minimum height=3.5em, 
        text width=3.5cm,
        on chain
    },
    arrow/.style={
        -{Stealth[scale=1.2]},
        thick
    }
]

    \node [block] (discrete) {
        Discrete property $\gamma$
    };

    \node [block] (continuous) {
        Continuous property $\Gamma$ \\ \scriptsize (refining $\gamma$)
    };

    \node [block] (gamma_cal) {
        Approximate $\Gamma$-calibration of $f$
    };

    \node [block, fill=purple!90!black] (psi_cal) {
        Approximate $\gamma$-calibration of \\ $\psi \circ g \equiv \psi \circ \Gamma \circ f$
    };

    \node [block, below=0.8cm of continuous] (f) {
        Predictor $f : \X \to \simplex$ \\ \scriptsize ($\epsilon$-distribution calibrated w.r.t. $\Gamma$)
    };
    \node [block, below=0.8cm of gamma_cal] (g) {
        Predictor $g : \X \to \reals$ \\ \scriptsize ($\epsilon$-$\Gamma$ calibrated)
    };

    \draw [arrow] (discrete) -- (continuous);
    \draw [arrow] (continuous) -- (gamma_cal);
    \draw [arrow] (gamma_cal) -- (psi_cal);
    \draw [arrow] (f) -- (gamma_cal);
    \draw [arrow] (g) -- (psi_cal);

    \node[above=0.2cm of discrete, font=\itshape\small] {Input};
    \node[above=0.2cm of continuous, font=\itshape\small] {\Cref{sec:elic-char}};
    \node[above=0.2cm of gamma_cal, font=\itshape\small] {\Cref{subsec:dist-Gamma}};
    \node[above=0.2cm of psi_cal, font=\itshape\small] {\Cref{subsec:Gamma-gamma}};

\end{tikzpicture}}
\caption{Flowchart of results. While we start with a discrete property $\gamma$, we use continuous properties $\Gamma$ as intermediaries for approximate calibration. We discuss the construction of these properties in \Cref{sec:elic-char}, implications for approximate $\Gamma$-calibration in \Cref{subsec:dist-Gamma}, and further implications for approximate $\gamma$-calibration in \Cref{subsec:Gamma-gamma}, where post-processing is formalized through $\psi$.}
\label{fig:flowchart}
\end{figure}

\section{Background and related work}
Consider a supervised learning task in a multiclass setting where features take values in $\X$, labels take values in a finite set $\Y$ with $3 \leq |\Y| := n < \infty$, and data is sampled i.i.d. from a measurable distribution $D \in \Delta(\X \times \Y)$.
We often contrast two predictors: a \emph{distributional} predictor $f : \X \to \simplex$ yielding distributions over labels in $\Y$, and a \emph{surrogate} predictor $g : \X \to \reals^d$ for some surrogate dimension $d$.
In particular, we often focus on scalar predictors $g : \X \to \reals$.
Let $D_{Y | \{\hat x : f(\hat x) = u\} }$ denote the marginal distribution on labels conditioned on observing the prediction $u$.
For a distribution $p \in \simplex$, we denote $p_y := \Pr[Y = y]$.

We consider loss functions measuring the error of \emph{reports} in $\R$ against \emph{ground truth labels} in $\Y$, denoting $L : \reals^d\times \Y \to \reals_+$ as a surrogate loss like hinge or cross-entropy, and $\ell : \R \times \Y \to \reals_+$ with $|\R|$ finite as a discrete loss like 0-1 loss or a Borda score.
We often represent a discrete loss $\ell$ as a cost matrix, where $\ell(r,y) = \ell_{r,y}$.
Similarly, we will discuss surrogate properties $\Gamma : \simplex \to \reals^d$, and discrete properties $\gamma : \simplex \to \R$ for a finite set $\R$.

\subsection{Properties, identifiability, embeddings}\label{subsec:elicitation}

The elicitation of discrete properties was first characterized by \citet{lambert_eliciting_2008,lambert_eliciting_2009}.
In essence, a property $\Gamma : \simplex \to \reals^d$ or $\gamma : \simplex \to \R$ maps distributions over labels to ``reports'' in some set.
The primary question driving the property elicitation research is which properties can be \emph{elicited} by specific loss functions, as elicitation tends to be a simpler tool for analysis than \emph{consistency}~\citep{bartlett_convexity_2006,zhang_statistical_2004}, which is necessary for establishing PAC learning bounds through empirical risk minimization.\footnote[2]{See works like \citep{zhang_statistical_2004,bartlett_convexity_2006,bao_calibrated_2021,bao_proper_2023,finocchiaro_unifying_2021,khurana2025consistency,ramaswamy_convex_2016,agarwal_consistent_2015,finocchiaro_embedding_2024} for a more nuanced discussion on the relationship between property elicitation, consistency, and classification calibration.}
\citeauthor{lambert_elicitation_2019} establishes a tight connection in the geometry of discrete properties and their elicitability through power diagrams \citep{aurenhammer_power_1987}; however, their characterization yields discrete losses, which are usually not tractable to directly optimize.
To circumvent this, the field has progressed in two diverging directions: restricting to the study of continuous properties~\citep{steinwart_elicitation_2014,fissler_higher_2016,noarov_statistical_2023,frongillo_elicitation_2021,peski2026nondistortionarybeliefelicitation}, or through the design of \emph{surrogate} loss functions~\citep{finocchiaro_embedding_2024,khurana2025consistency,ramaswamy_consistent_2016,ramaswamy_convex_2016,bao_calibrated_2021,wang2020weston} whose surrogate predictions in $\reals^d$ can later be \emph{linked} back to discrete decisions by a \emph{link} function $\psi : \reals^d \to \R$.
As one prominent example, the categorical cross-entropy loss elicits the identity property $\Gamma(p) = p$, which is continuous.
This continuous property is then often ``linked'' to the discrete property of the mode $\psi(u) = \argmax_y u_y$.

\begin{definition}[(Direct, indirect) elicitation, refinement]
    A loss $L : \reals^d \times \Y \to \reals_+$ \emph{(directly) elicits} a property $\Gamma : \simplex \to \reals^d$ if, for all $p \in \simplex$, we have
    \begin{align*}
        \Gamma(p) &= \argmin_{u \in \reals^d} \E_{Y \sim p} L(u,Y)~.~
    \end{align*}
    Moreover, we say $L$ \emph{indirectly elicits} a property $\gamma : \simplex \to \R$ if it elicits $\Gamma$ and there exists a link $\psi : \reals^d \to \R$ such that $r \in \Gamma(p) \implies \psi(r) \in \gamma(p)$ for all $p \in \simplex$.
    If such a link exists, we say that $\Gamma$ \emph{refines} $\gamma$.
\end{definition}
Often, we refer to the \emph{level set} or set-valued inverse of a property $\Gamma^{-1}(u) = \{ p \in \simplex : u \in \Gamma(p)\}$. 
Note that for discrete properties $\gamma : \simplex \to \R$ elicited by discrete losses $\ell : \R \times \Y \to \reals_+$, the definition of elicitation follows in the same manner, replacing $\reals^d$ with $\R$. 
However, in this case, the property might be set-valued on the boundary between two level sets $r_i$ and $r_{i+1}$, which is set of measure $0$.
In this case, we sometimes study the object $\gamma^{-1}(r_i) \cap \gamma^{-1}(r_{i+1})$ as the boundary.

\citet{frongillo_vector-valued_2015, fissler_higher_2016} and later \citet{frongillo2021elicitation} formalize the notion of \emph{elicitation complexity} for classes of loss functions or properties. 
This notion of complexity is parameterized by the dimension $d$ of inputs to the loss function.
A lower dimension $d$ can improve computational costs for gradient-based optimization methods.

\begin{definition}[Elicitation complexity {\citep{frongillo2021elicitation}}]
 For $d \in \N \cup \{\infty\}$, let $\Ek$ denote the class of all elicitable properties $\Gamma : \simplex \to \reals^d$.
    Let $\C$ be a class of properties and $d \in \mathbb{N} \cup \{\infty\}$.
    A property $\Gamma : \simplex \to \R$ is $d$-elicitable with respect to $\C$ if there exists an intermediate property $\hat \Gamma \in \C \cap \Ek$ and a map $\psi$ such that $\Gamma = \psi \circ \hat \Gamma$.
    The elicitation complexity of $\Gamma$ is $\elic_\C(\Gamma) = \min\{d: \Gamma \text{ is } d-\text{elicitable with respect to } \C\}$.
\end{definition}

For example, while categorical cross-entropy takes $d = n$-dimensional inputs--- the highest sensible dimension possible, the identity property it elicits $\Gamma(p) = p \in \reals^n$ can be post-processed into \emph{any} discrete property $\gamma : \simplex \to \R$, even those which are not elicitable themselves, providing a trivial upper bound on elicitation complexity of $n$ for all properties $\gamma$.

We are particularly interested in the \emph{Lipschitz continuous} elicitation complexity of discrete properties $\gamma$, where a property $\Gamma : \simplex \to \reals^d$ is \emph{Lipschitz}.
Let $\CLip{K}$ denote the class of $K$-Lipschitz properties, and $\CLipnoK$ denote the set of properties which are $K$-Lipschitz for some $K \geq 0$.
Without loss of generality, we assume $K$-Lipschitz property values are bounded in the relevant $[0,1]^d$ hypercube.

Characterizations of these notions of complexity are most complete in $1$ dimension, and are established in the context of \emph{orderable} properties~\citet{finocchiaro_embedding_2020,khurana2025consistency}.
We pose a notion of \emph{strong orderability} necessary for obtaining Lipschitz properties.
With these properties, we construct differentiable surrogates $L : \reals \times \Y \to \reals_+$ indirectly eliciting strongly orderable properties; the intermediate property directly elicited by $L$ is in $\CLipnoK$, which we use in \Cref{sec:calibration} for approximate calibration guarantees. 

\begin{definition}[{(Strongly) Orderable property~\citep{finocchiaro_embedding_2020}}]
    A finite property $\gamma : \simplex \to 2^\R \setminus \{\emptyset\}$ is \emph{orderable} if there is an enumeration of $\R = \{r_1, \ldots, r_{|\R|}\}$ such that for all $i \leq |\R| - 1$, we have $\gamma^{-1}(r_i) \cap \gamma^{-1}(r_{i+1})$ is a hyperplane intersected with $\simplex$.
    Moreover, we say $\gamma$ is \emph{strongly orderable} if it is orderable and, for all $i \in \{2, \ldots, |\R| - 1\}$, we have $\inf_{p \in \gamma^{-1}(r_{i-1}) \cap \gamma^{-1}(r_i), q \in \gamma^{-1}(r_i)\cap \gamma^{-1}(r_{i+1})}\|p - q\|$ is bounded away from $0$.
\end{definition}

\subsection{Calibration in multiclass settings}
While calibration of predictors has been well-studied in binary outcome settings, definitions in multiclass settings are not universally agreed upon, and results are generally definition-specific.
Recently, \citet{derr2025threetypesofcalibration} categorize three primary approaches to defining calibration in multiclass settings, though we focus on two: \emph{distribution calibration with respect to $\gamma$} and \emph{$\Gamma$-calibration}.
For intuition, consider calibration error which measures (a) conditioned on a piece of information, such as a model's prediction, the (b) distance between some prediction against a baseline.
The computational and statistical challenge posed by na\"ive extensions of multiclass calibration is that the granularity of provided conditioning information in (a) grows exponentially in the number of classes $n$.

\citeauthor{derr2025threetypesofcalibration} characterize the first set of multiclass calibration definitions as \emph{distribution calibration with respect to $\gamma$}, where $\gamma : \simplex \to \R$ is some given property representing the conditioning event (a), with granularity $|\R| = \mathbf{poly}(n)$~\citep{zhao_calibrating_2021,bairaktari2025sample,gopalan_computationally_2024}.

\begin{definition}[Approximate distribution calibration with respect to $\gamma$]
    A distributional predictor $f : \X \to \simplex$ is $\epsilon$-\emph{distribution calibrated with respect to $\gamma$} if
    \begin{align*}
        \E_{X, Y} \| f(X) - D_{Y \mid \{\hat x : \gamma(f(\hat x)) = \gamma(f(X))\}} \| & \leq \epsilon~.
    \end{align*}
\end{definition}
Of course, the choice of norm affects the meaning of a calibration bound; this is demonstrated by considering the diameter of the simplex for different $p$-norms: with $p = 1$, the simplex has diameter $2$; with $p = 2$, the diameter is $\sqrt{2}$, and with $p = \infty$, the diameter is $1$.
Generally speaking, we use an unspecified $L_p$ norm for $p \geq 1$ following the precedent of \citet{garg_oracle_2023}.
\citet[Definition 2.3]{garg_oracle_2023} discusses the most common metric choices being $L_1$, $L_2$, and $L_\infty$ norms, and proceed to give results relating approximate calibration coefficients with different choices of $L_p$-norms.
Relationships with other calibration error metrics are discussed in \Cref{app:calibration-metrics}.

Moving one step beyond distribution calibration with respect to $\gamma$, $\Gamma$-calibration focuses on the distance of predictions and baseline in (b), measuring distance in \emph{prediction space} $\im(\Gamma)$ rather than distribution space $\simplex$.
For discrete predictions and properties, this distance is not continuous in function space, and approximate calibration is not empirically stable.
To sidestep this issue, \citet{noarov_statistical_2023} and \citet{gneiting_regression_2023} assume properties $\Gamma$ are continuous, \citet{derr2025threetypesofcalibration} only give \emph{exact} $\Gamma$-calibration results for discrete properties, \citet{hu2025efficientswapmulticalibrationelicitable} focus on binary outcome settings (emphasizing connections to calibration error metrics), and \citet{collina2026samplecomplexitymulticalibration} study the sample complexity of multicalibration for continuous scalar properties.

\citet{derr2025threetypesofcalibration} discuss approximate $\Gamma$-calibration in terms of an arbitrary metric, but do not give significant approximate results for discrete properties.
With this insight, we use continuous properties so that we can use $L_p$-norms as a distance metric, and bound the approximate calibration of discrete properties through 0-1 error.

\begin{definition}[Approximate $\Gamma$-calibration for continuous properties]
    Consider a continuous property $\Gamma : \simplex \to \reals^d$ and $\Gamma$-predictor $g : \X \to \reals^d$. 
    $g$ is $\epsilon$-approximately $\Gamma$-calibrated if,  $\E_{X,Y}\|\Gamma(D_{Y|\{ \hat x : g(\hat x) = g(X)\}}) - g(X)\| \leq \epsilon$.
\end{definition}

When defining a miscalibration metric for discrete properties, we use the 0-1 metric $m(s, r) = \mathbbm{1}(s \neq r)$.
The discrete predictor in the following definition $h : \X \to \R$ is often not directly optimzed, but formed as a post-processing of some continuous predictor.
That is, we often have $h = \psi \circ g$ for a link function $\psi : \reals^d \to \R$.

\begin{definition}[Approximate $\gamma$-calibration for discrete properties]
    Consider a discrete property $\gamma : \simplex \to \R$ for a finite set $\R$ and $\gamma$-predictor $h : \X \to \R$. 
    $h$ is $\epsilon$-approximately $\gamma$-calibrated if  $Pr_{X}[\gamma(D_{Y|h(X)}) \neq h(X)] \leq \epsilon$.
\end{definition}

With preliminaries established, we now restate our goal: we aim to understand the relationship between the approximate distribution calibration of a predictor $f : \X \to \simplex$ and the approximate $\gamma$-calibration of its post-processing $\psi \circ \Gamma \circ f$, where $\Gamma$ is a continuous property refining $\gamma$.
We now proceed by constructing $\Gamma$ for strongly orderable $\gamma$.

\section{Characterizing 1-dimensional Lipschitz elicitable properties}\label{sec:elic-char}
In this section, we establish that orderable properties are refined by $1$-dimensional Lipschitz properties $\Gamma : \simplex \to \reals$, and as a consequence, we conclude that for any strongly orderable property $\gamma$, $\elic_{\CLipnoK}(\gamma) = 1$.
We later apply this characterization to establish approximate calibration guarantees for such discrete orderable properties.
Notably, our characterization is constructive, as \Cref{alg:smooth-embedding} and \Cref{alg:piecewise-RoE-normals} yield Lipschitz properties $\Gamma$ refining $\gamma$.
\Cref{alg:smooth-embedding} modifies the (piecewise linear, convex, but not differentiable) surrogate construction of \citet[Theorem 11]{finocchiaro_embedding_2020} and interpolates to ``smooth'' their surrogate; such an interpolation is not straightforward to ensure property refinement.
This approach is also very similar to the approach of \citet[Appendix H]{khurana2025consistency}, though they do not establish Lipschitzness in their work.
We conjecture this approach might be generalized to higher dimensions, but leave this to future work.

In contrast, \Cref{alg:piecewise-RoE-normals} uses the geometry of the discrete property $\gamma$ directly, constructing a property $\Gamma$ which is a piecewise ratio of expectations, where the functions forming the numerator and denominator are defined by the hyperplanes defining an orderable property.
Crucially, it is not clear if it possible to generalize this approach for $d > 1$.
With the Lipschitz property $\Gamma$ in both algorithms, we apply the characterizations of \citet{steinwart_elicitation_2014} and \citet{finocchiaro_convex_2018} to construct a surrogate loss indirectly eliciting $\gamma$. 

Both \Cref{alg:smooth-embedding} and \Cref{alg:piecewise-RoE-normals} return surrogate losses eliciting Lipschitz continuous properties, but these properties are not necessarily bounded in $[0,1]^d$.
They are however, generally bounded, and therefore can be scaled by $L^*(u, y) := L(\frac{u - \Gamma_{\min}}{\Gamma_{\max} - \Gamma_{\min}}, y)$.
For simplicity, we just return $L$.

\paragraph{Motivating example}
Consider the loss matrix $\ell$\footnote[2]{Recall $\ell_{r,y} = \ell(r,y)$ is the discrete loss with report $r$ and outcome $y$.} and elicited discrete property $\gamma$ as follows, with outcomes labeled $\Y = \{1,2,3\}$.
\begin{align}
    \ell &= \begin{bmatrix}
        0 & 3 & 5 \\ 1 & 0 & 3 \\ 3 & 1 & 0
    \end{bmatrix}\label{eq:motivating-example}~,~ & 
    \gamma(p) &= \begin{cases}
    1 & -3 p_1 + p_2 \leq -2 \\ 
2 & -3p_1 + p_2 \geq -2 \, \wedge \,  5p_1 + 4p_2 \ge 3 \\
3 & 5p_1 + 4p_2 \le 3
\end{cases}
\end{align}
While $\ell$ and $\gamma$ are discrete, we seek a continuous relaxation $\Gamma$ of $\gamma$ in order to show $\elic_\CLipnoK(\gamma) = 1$.
\Cref{fig:asymmetric-examples} depicts the level sets $\Gamma^{-1}(u)$ of the continuous properties yielded from \Cref{alg:smooth-embedding} and \Cref{alg:piecewise-RoE-normals}, respectively.
Applications of our algorithms to this example are in \Cref{app:apply-algos}.

\begin{figure}[htbp]
     \centering
     \begin{subfigure}[b]{0.35\textwidth}
         \centering
         \resizebox{\linewidth}{!}{\definecolor{purple}{RGB}{209, 120, 250}

\begin{tikzpicture}[scale=8]
    \coordinate (A) at (0, 0); 
    \coordinate (B) at (1, 0); 
    \coordinate (C) at (0.5, {sqrt(3)/2});

    \coordinate (B12_AC) at (0.125, {0.25*sqrt(3)/2}); 
    \coordinate (B12_AB) at (0.333, 0);               
    
    \coordinate (B23_AB) at ({4/11}, 0);              
    
    \coordinate (B34_BC) at (0.625, {0.75*sqrt(3)/2}); 
    \coordinate (B34_AB) at (0.4, 0);                  

    \fill[blue!10] (A) -- (B12_AC) -- (B12_AB) -- cycle;
    \fill[green!10] (B12_AC) -- (C) -- (B23_AB) -- (B12_AB) -- cycle;
    \fill[orange!10] (C) -- (B34_BC) -- (B34_AB) -- (B23_AB) -- cycle;
    \fill[red!10] (B34_BC) -- (B) -- (B34_AB) -- cycle;

    \foreach \c in {0.05, 0.1,..., 0.5} {
        \pgfmathsetmacro{\pOne}{4*\c / (6 + 4*\c)}
        \pgfmathsetmacro{\pTwo}{4*\c / (5 + 2*\c)}
        \begin{scope}
            \clip (A) -- (B12_AC) -- (B12_AB) -- cycle;
            \draw[blue!70!black, thin] ({0.5*\pOne}, {\pOne*sqrt(3)/2}) -- (\pTwo, 0);
        \end{scope}
    }

    \foreach \c in {.55, .6, ..., 1.0} {
        \pgfmathsetmacro{\pOne}{4 / (28 - 24*\c)}
        \pgfmathsetmacro{\pTwo}{4 / (13 - 2*\c)}
        \begin{scope}
            \clip (B12_AC) -- (C) -- (B23_AB) -- (B12_AB) -- cycle;
            \draw[green!60!black, thin] ({0.5*\pOne}, {\pOne*sqrt(3)/2}) -- (\pTwo, 0);
        \end{scope}
    }

    \foreach \c in {1.05, 1.1, ..., 2.0} {
        \pgfmathsetmacro{\pOne}{4 / (6 - 2*\c)}
        \pgfmathsetmacro{\pTwo}{4 / (12 - \c)}
        \begin{scope}
            \clip (C) -- (B34_BC) -- (B34_AB) -- (B23_AB) -- cycle;
            \draw[orange!80!black, thin] ({0.5*\pOne}, {\pOne*sqrt(3)/2}) -- (\pTwo, 0);
        \end{scope}
    }

    \foreach \c in {2.05, 2.1, ..., 3.0} {
        \pgfmathsetmacro{\pOne}{(4*\c - 4) / (4 - \c)}
        \pgfmathsetmacro{\pTwo}{(4*\c - 4) / (14 - 2*\c)}
        \begin{scope}
            \clip (B34_BC) -- (B) -- (B34_AB) -- cycle;
            \draw[red!70!black, thin] ({0.5*\pOne}, {\pOne*sqrt(3)/2}) -- (\pTwo, 0);
        \end{scope}
    }

    \draw[black, thick] (B12_AC) -- (B12_AB) node[left, pos=0.2] {\scriptsize $u=\frac 12$};
    \draw[black, thick] (C) -- (B23_AB) node[left, pos=0.6] {\scriptsize $u=\frac 3 2$};
    \draw[black, thick] (B34_BC) -- (B34_AB) node[right, pos=0.6] {\scriptsize $u=2$};

    \draw[very thick] (A) -- (B) -- (C) -- cycle;
    \node[below left] at (A) {$Y = 1$};
    \node[below right] at (B) {$Y = 3$};
    \node[above] at (C) {$Y = 2$};
\end{tikzpicture}}
         \caption{Level sets $\Gamma^{-1}(u)$ of the property $\Gamma$ constructed in \Cref{alg:smooth-embedding}.}
     \end{subfigure}
     \hfill
     \begin{subfigure}[b]{0.35\textwidth}
         \centering
         \resizebox{\linewidth}{!}{\definecolor{purple}{RGB}{209, 120, 250}

\begin{tikzpicture}[scale=8]
    \coordinate (A) at (0, 0); 
    \coordinate (B) at (1, 0); 
    \coordinate (C) at (0.5, {sqrt(3)/2});

    \coordinate (B12_AC) at (0.125, {0.25*sqrt(3)/2}); 
    \coordinate (B12_AB) at (0.333, 0);               
    
    \coordinate (B23_AB) at ({4/11}, 0);              
    
    \coordinate (B34_BC) at (0.625, {0.75*sqrt(3)/2}); 
    \coordinate (B34_AB) at (0.4, 0);                  

    \fill[blue!10] (A) -- (B12_AC) -- (B12_AB) -- cycle;
    \fill[green!10] (B12_AC) -- (C) -- (B34_BC) -- (B34_AB) -- (B12_AB) -- cycle;
    \fill[orange!10] (B34_BC) -- (B) -- (B34_AB) -- cycle;

    \foreach \c in {-.95, -.9,..., 0} {
        \pgfmathsetmacro{\pOne}{(\c*3.74165 + 1)/4}
        \pgfmathsetmacro{\pTwo}{(\c*3.74165 + 1)/3}
        \begin{scope}
            \clip (A) -- (B12_AC) -- (B12_AB) -- cycle;
            \draw[blue!70!black, thin] ({0.5*\pOne}, {\pOne*sqrt(3)/2}) -- (\pTwo, 0);
        \end{scope}
    }

    \foreach \c in {.05, .1, ..., 1} {
        \pgfmathsetmacro{\pOne}{(\c + 1)/(4 - 3*\c)}
        \pgfmathsetmacro{\pTwo}{(\c + 1)/(3 + 2*\c)}
        \begin{scope}
            \clip (B12_AC) -- (C) -- (B34_BC) -- (B34_AB) -- (B12_AB) -- cycle;
            \draw[green!60!black, thin] ({0.5*\pOne}, {\pOne*sqrt(3)/2}) -- (\pTwo, 0);
        \end{scope}
    }

    \foreach \c in {1.05, 1.1, ..., 2.0} {
        \pgfmathsetmacro{\pOne}{((\c-1)*3.74165 + 2)}
        \pgfmathsetmacro{\pTwo}{((\c-1)*3.74165 + 2)/5}
        \begin{scope}
            \clip (B34_BC) -- (B) -- (B34_AB) -- cycle;
            \draw[orange!70!black, thin] ({0.5*\pOne}, {\pOne*sqrt(3)/2}) -- (\pTwo, 0);
        \end{scope}
    }

    \draw[black, thick] (B12_AC) -- (B12_AB) node[left, pos=0.2] {\scriptsize $u=0$};
    \draw[black, thick] (B34_BC) -- (B34_AB) node[right, pos=0.6] {\scriptsize $u=1$};

    \draw[very thick] (A) -- (B) -- (C) -- cycle;
    \node[below left] at (A) {$Y = 1$};
    \node[below right] at (B) {$Y = 3$};
    \node[above] at (C) {$Y = 2$};
\end{tikzpicture}}
         \caption{Level sets of the property constructed in \Cref{alg:piecewise-RoE-normals}.}
     \end{subfigure}
     \hfill
     \begin{subfigure}[b]{0.2\linewidth}
         \caption{Comparison of the continuous properties produced from \Cref{alg:smooth-embedding} and \Cref{alg:piecewise-RoE-normals}, respectively, applied to \Cref{eq:motivating-example} The black contours depict the property $\gamma$ in \eqref{eq:motivating-example}, or a slight refinement of it in the case of the embedding.}
     \label{fig:asymmetric-examples}
     \end{subfigure}
\end{figure}

\subsection{Algorithm from embeddings}
\Cref{alg:smooth-embedding} uses the \emph{embeddings} approach of \citet{finocchiaro_embedding_2024}.
Given a discrete loss $\ell : \R \times \Y \to \reals_+$, \citeauthor{finocchiaro_embedding_2024} provide a construction of a piecewise linear and convex surrogate $L : \reals^{n-1} \times \Y \to \reals_+$ by ``embedding'' discrete reports into a continuous space for optimization through the function $\varphi : \R \to \reals^{n-1}$.
When properties are \emph{orderable}, \citet[Theorem 11]{finocchiaro_embedding_2020} produces a 1-dimensional surrogate loss $L : \reals \times \Y \to \reals_+$ and embedding $\varphi : \R \to \reals$ so that $L$ indirectly elicits $\gamma$ elicited by $\ell$.
However, since their embeddings are piecewise linear, they do not elicit a Lipschitz property.
We must take care to ``smooth'' these embeddings while maintaining property refinement guarantees.

\Cref{alg:smooth-embedding} is very similar to the construction given in \citet[Appendix H]{khurana2025consistency}.
However, their construction linearly interpolates between loss values between integers in $[k]$.
By adding midpoints to the interpolation set, we are able to verify refinement in \Cref{alg:smooth-embedding} as level sets of $\Gamma$ coincide with boundaries of $\gamma$ at these midpoints. 
We let $\delta^+ L(u, \cdot)$ denote the right directional derivative with respect to the first argument at $u$, and likewise for $\delta^-$ as the left derivative.

\begin{algorithm}
    \begin{algorithmic}
        \Procedure {Smooth Embedding}{Orderable $\gamma$}
        \State Construct embedding $L : \mathbb{R} \times \Y \to \mathbb{R}_+$ and embedding function $\varphi: \R \to \reals$.
        \State Take interpolation set $U = \varphi(\R) \cup \{\frac{\varphi(r_i) + \varphi(r_{i+1})}{2} : i \in [|\R|-1] \}$
        \State Take function $V(u,y) = \begin{cases}
            \frac{d}{du} L(u,y) & \text{$L$ differentiable} \\
            0 & \text{sign($\delta^+(L(u,y))$)} \neq \text{sign($\delta^- (L(u, y))$)} \\
            \frac{\delta^+ L(u,y) + \delta^- L(u,y)}{2} & \text{otherwise}
        \end{cases}$
        \State Construct $\bar V$ by linearly interpolating between $\{V(u, \cdot) : u \in U \}$ on $\conv(U)$, and matching $V$ outside $\conv(U)$. 
        \State Integrate $\bar V(u,y)$ to get $\bar L(u,y) = \int_0^u\bar V(z,y) dz$
        \State Construct $\psi: \mathbb{R} \to \R$ as a rounding based on the interpolation set $U$.
        \State \Return $\bar L, \psi$.
        \EndProcedure
    \end{algorithmic}
    \caption{Algorithm for generating continuous property from discrete embedding}\label{alg:smooth-embedding}
\end{algorithm}

The loss function constructed in \Cref{alg:smooth-embedding} is piecewise quadratic as the integration of piecewise linear functions.
This construction is not unique, as exemplified by the scoring rule characterization of \citet{steinwart_elicitation_2014} and \citet{finocchiaro_convex_2018}, but one benefit of this form is that the derived property $\Gamma$ is Lipschitz continuous with constant $\max_{u,y} |\bar V(u,y)|$; since $\bar V$ is bounded, this constant is bounded.

\begin{propositionE}[][end, restate]\label{thm:1-embeddable-iff-1Lip}
    Given a strongly orderable property $\gamma$, \Cref{alg:smooth-embedding} returns a surrogate loss $\bar L$ and link $\psi$ such that $\bar L$ elicits a Lipschitz continuous property $\Gamma$ refining $\gamma$ through the link $\psi$. 
    Therefore, $\elic_\CLipnoK(\gamma) = 1$.
\end{propositionE}
\begin{proofE}
    We show (a) a closed form of $\Gamma$ elicited by $\bar L$ constructed in \Cref{alg:smooth-embedding}, (b) $\Gamma$ is Lipschitz, and (c) $\Gamma$ refines $\gamma$.

    \paragraph{(a)}

    The spirit of \Cref{alg:smooth-embedding} is to ``smooth'' a non-smooth loss into a smooth one by interpolating a step function $V$ into a piecewise linear function $\bar V$.
    \citet[Theorem 5]{steinwart_elicitation_2014} establishes that if a continuous function $\bar V(u, y)$ yields loss $\bar L(u,y) := \int_0^u \bar V(z,y) dz$, then $\bar L$ elicits the property $\Gamma$, $\Gamma$ is the root of $\E_{Y \sim p} \bar V(\cdot, Y)$.
    That is, $\E_{Y \sim p}\bar V(u, Y) = 0\iff u = \Gamma(p)$.
    Therefore, we can derive $\Gamma(p)$ by examining the root of $\bar V$.
    Since $\bar V$ is piecewise linear, we examine each piece $j$ independently for its root:
    \begin{align*}
        \E_{Y \sim p} \bar V(u, Y) = 0 &\iff \sum_y p_y(a^{(j)}_y u + b^{(j)}_y) = 0 
        \iff u =  -\frac{\inprod{p}{b^{(j)}}}{\inprod{p}{ a^{(j)}}} ~.
    \end{align*}
    
    Writing $\bar V(u, y)$ as a linear interpolation, we obtain $\bar V(u,y) = V(s_{i-1}, y) + (u - s_{i-1}) \left( \frac{V(s_i, y) - V(s_{i-1}, y)}{s_i - s_{i-1}}\right)$ for some $s_{i-1}, s_i \in U$, which enables us to re-write 
    \begin{align*}
        \Gamma(p) &= - \frac{\E_{Y \sim p} V(s_{j(p)-1}, Y) - s_{j(p)-1} \frac{\E_{Y \sim p} V(s_{j(p)}, Y) - \E_{Y \sim p} V(s_{j(p)-1}, Y)}{s_{j(p)} - s_{j(p)-1}} }{\frac{\E_{Y \sim p} V(s_{j(p)}, Y) - \E_{Y \sim p} V(s_{j(p)-1}, Y)}{s_{j(p)} - s_{j(p)-1}}}~,
    \end{align*}
    where $j(p) = \min\{l : \E_{Y \sim p} V(U_l, Y) > 0\}$, where $U_l$ is the $l^{th}$ ordered element of $U$ (and $\min(\emptyset) := 0$).
    Crucially, this implies that the level sets defining cases of $\Gamma(p)$ are contained in $U$, as $j(p)$ increments.

    \paragraph{(b)}
    Notably, $\Gamma$ is Lipschitz continuous as long as $\inprod{p}{a^{(j)}}$ is bounded away from $0$ for all pieces $j$. 
    If this value did approach $0$ for some $p$, then the identification function becomes flat on an interval of nonzero (Lebesgue) measure, meaning the hyperplanes separating discrete level sets are not bounded away from each other, which contradicts strong orderability of $\gamma$.

    \paragraph{(c)}
    $\Gamma$ is a piecewise ratio of expectations, where pieces correspond to the interpolation set.
    We claim that, for each $i \in \{1, 2, \ldots, k-1\}$, there is an element $s \in U$ of the interpolation set such that $\Gamma^{-1}(s) = \gamma^{-1}(r_i) \cap \gamma^{-1}(r_{i+1})$.
    To construct the link function, it helps to decompose $U = \varphi(\R) \cup S$, where $S = \{s_1, \ldots, s_{k-1}\}$ is the constructed set of midpoints. 
    Consider the link function $\psi(u) = r_k$, with $k = \#\{ j : s_j < u \} +1$.

    Since $\psi(u)$ only increments at elements of $S$, $j(p)$ only increments at elements of $U$, and elements of $U$ are ordered alternating embedding points and midpoints (in $S$), $\psi \circ \Gamma$ can be rewritten $p \mapsto \lceil \frac{j(p)}{2} \rceil$, which aggregates level sets at the interpolation points in $S$.
    Therefore, $\Gamma$ refines $\gamma$.

\end{proofE}

\subsection{Algorithm from normals}
We now present another construction for constructing Lipschitz continuous refinements of orderable properties that directly leverages the geometric structure of the property itself, instead of ``smoothing'' a non-smooth surrogate.
In essence, \Cref{alg:piecewise-RoE-normals} constructs a surrogate property $\Gamma$ as a piecewise ratio of expectations of two linear functions, which are inner products of vectors normal to the boundaries of level sets.
Strong orderability of $\gamma$ is necessary here since it enables us to construct precisely one normal vector per boundary.
We say a set of vectors $o_1, \ldots, o_{k-1}$ defining level sets $\gamma^{-1}(r_1), \ldots, \gamma^{-1}(r_k)$ is \emph{oriented} if $p \in \gamma^{-1}(r_i) \iff \inprod{o_{i+1}}{p} \leq 0 \leq \inprod{o_i}{p}$ for all $i \in 1, \ldots, k$, omitting the undefined inequalities for $i \in\{1, k\}$.

\begin{algorithm}
    \begin{algorithmic}
        \Procedure {Generate Piecewise RoE}{Orderable $\gamma$}
        \ForAll{$i \in 1, \ldots, |\R| - 1 := k$}
        \State Find $n-1$ distributions $p_{ij} \in \gamma^{-1}(r_i) \cap \gamma^{-1}(r_{i+1}) \cap \relint(\simplex)$ to form $P_i \in \reals^{n-1 \times n}$, and define a normal $o_i \in \mathrm{ker}(P_i)$ using SVD.
        \State Ensure $o_i$ is oriented
        \EndFor
        \State Construct identification function
        \State $V(u,y) = u - \chi(u, [0, |\R| - 2]) - o_{1,y} - \sum_{j=1}^{k-1} \chi(u-(j-1), [0,1]) (o_{j+1,y} - o_{j,y})$
        \State $L(u,y) = \int_0^u V(x, y) dx$
        \State $\psi: x \mapsto \chi(\lceil x \rceil, [0, |\R|-1]) + 1$
        \State \Return $L, \psi$
        \EndProcedure
    \end{algorithmic}
    \caption{Algorithm for generating Piecewise ROE from normals, where $\chi(x, I)$ is the clipping of $x$ to interval $I$.}\label{alg:piecewise-RoE-normals}
\end{algorithm}

\begin{propositionE}[][end,restate]\label{prop:normals-alg-gives-refining-prop}
    Let $\gamma$ be a strongly orderable finite property.
    Then \Cref{alg:piecewise-RoE-normals} yields a surrogate loss $L$ and link $\psi$ such that $L$ elicits a Lipschitz continuous property $\Gamma : \simplex \to \reals$ which refines $\gamma$ through the link $\psi$.
\end{propositionE}
\begin{proofE}
    Without loss of generality, define $\gamma(p) = \min\{i \in [k-1] : \inprod{o_i}p \geq 0$\}.
    If this set is empty, define $\gamma(p) = 0$.
    
    We aim to (a) derive a closed form of the property $\Gamma$ elicited by the loss returned in \Cref{alg:piecewise-RoE-normals},  (b) $\Gamma$ is continuous in $u$ for all $y \in \Y$, and (c) there is a link function $\psi : \reals \to \{1, \ldots, k\}$ such that $\psi(\Gamma(p)) = \gamma(p)$. 
    If all of these statements hold, then $\Gamma$ refines $\gamma$ by the link function $\psi$.

    \paragraph{(a) Closed form of $\Gamma$} Consider $3$ cases for the construction of $V$:
    \begin{align*}
        V(u,y) &= \begin{cases}
            u -o_{1,y} & u \leq 0 \\
            -o_{i,y} - (u-(i-1))(o_{i+1,y}- o_{i,y}) & u \in [0, k-1] \\
            u - (k-1) -o_{k,y} & u \geq k-1
        \end{cases}~,~
    \end{align*}
    where the intermediate case is a simplification arising from a telescoping sum.
    For each of these cases, we use the scoring rule characterization of \citet{steinwart_elicitation_2014} and  set $\E_{Y \sim p} V(u,Y) = 0 \iff u = \Gamma(p)$ to obtain
    \begin{align*}
        \Gamma(p) = \begin{cases}
        \inprod {o_1}{p} & \inprod{o_1}{p} \leq 0 \\
        \frac{\inprod{o_i}{p}}{\inprod{o_{i} - o_{i+1}}{p}} + (i - 1) & \exists i \in [k-1] : \inprod{o_{i+1}}{p} \leq 0 \leq \inprod{o_i}{p} \\
        \inprod{o_k}{p} + (k-1) & \inprod{o_k}p \geq 0
    \end{cases}~.~
    \end{align*}
    Since the normals $\vec o$ are oriented, we conclude that every $p \in \simplex$ satisfies one of these criteria. 
    Moreover, on the boundaries, we can see that $\Gamma$ is continuous the boundary criterion of $\inprod{o_i} p = 0$ implies $\Gamma(p) = (i-1)$, while the boundary criterion of $\inprod{o_{i+1}} p = 0$ implies $\Gamma(p) = 1 + (i-1) = i$.

    \paragraph{(b) $\Gamma$ is Lipschitz}
    Observe that a ratio of linear functions is Lipschitz continuous as long as the numerator and denominator are bounded, and the denominator is bounded away from $0$.
    Since the domain of $\gamma$ is the simplex, and is therefore bounded, so are the numerator and denominator.
    Moreover, the denominator is bounded away from $0$ if all pairs of hyperplanes forming boundaries of $\gamma$ are bounded away from each other, which follows from strong orderability of $\gamma$.

    \paragraph{(c)} Finally, consider the link $\psi: x \mapsto \chi(\lceil x \rceil, [0, |\R|-1]) + 1$.
    If $\Gamma(p) < 0$, then $\psi(\Gamma(p)) = 1$, for $\Gamma(p) \in [1, k]$, we have $\psi(\Gamma(p)) = \lceil\Gamma(p)\rceil$, and for $\Gamma(p) \geq k = |\R| - 1$, we have $\psi(\Gamma(p)) = |\R|$.
    Therefore, for all $p \in \simplex$, $\Gamma(p) < 0 \implies \inprod{o_1}p < 0 \implies \gamma(p) = 1 = \chi(\lceil (\Gamma(p)\rceil, [0, k-1]) +1$.
    Similarly, $\Gamma(p) \in [i-1, i)\implies  \inprod{o_{i}}p \geq 0 \geq \inprod{o_{i+1}}{p} \implies \gamma(p) = i + 1 = \lceil \Gamma(p)\rceil = \chi(\lceil \Gamma(p)\rceil, [0, k-1]) + 1$ for any $i \in [k]$.
    Finally, $\Gamma(p) \geq k \implies \inprod{o_{k}}p \geq 0 \implies \gamma(p) = k = \chi(\lceil (\Gamma(p)), [0, k-1]) + 1$.
    Therefore, $\Gamma$ refines $\gamma$ by the link $\psi$.
\end{proofE}

\section{Application to $\Gamma$-calibration}\label{sec:calibration}
The characterization of Lipschitz elicitation complexity, particularly with the construction of Lipschitz refining properties in \Cref{alg:smooth-embedding,alg:piecewise-RoE-normals}, enables us to establish bounds on the approximate $\Gamma$-calibration of both distributional predictors post-processed by direct computation of $\Gamma$ (\Cref{subsec:dist-Gamma}), and from $\Gamma$-predictors to discrete $\gamma$-predictors (\Cref{subsec:Gamma-gamma}).
The results of these sections can be composed to obtain bounds on the probability of miscalibration with respect to a discrete property when given a distributional predictor $f : \X \to \simplex$.

\subsection{Relationship between Distribution calibration and $\Gamma$-calibration}\label{subsec:dist-Gamma}

\begin{theorem}[$\Gamma$-calibration bounds by post-processing distributional predictor]
    Let $\Gamma : \simplex \to \reals$ be K-Lipschitz.
    Then a predictor $f$ that is $\epsilon$-distribution calibrated with respect to $\Gamma$ yields predictor $\Gamma \circ f$ which is $K\epsilon$-approximately $\Gamma$-calibrated.
\end{theorem}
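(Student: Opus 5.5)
The plan is to identify the conditioning events in the two calibration definitions and then apply the Lipschitz property of $\Gamma$ pointwise inside the expectation. Write $g := \Gamma \circ f : \X \to \reals$. The first step is to observe that for every $x \in \X$,
\begin{align*}
    \{\hat x : g(\hat x) = g(x)\} = \{\hat x : \Gamma(f(\hat x)) = \Gamma(f(x))\}~,
\end{align*}
which holds by the definition of $g$. Consequently the conditional label distribution appearing in the definition of approximate $\Gamma$-calibration of $g$ is exactly the conditional distribution appearing in the definition of $\epsilon$-distribution calibration of $f$ with respect to $\Gamma$. Write $q(X) := D_{Y \mid \{\hat x : \Gamma(f(\hat x)) = \Gamma(f(X))\}} \in \simplex$ for this common conditional distribution.

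The second step is to bound the integrand pointwise. Since $g(X) = \Gamma(f(X))$ and both $q(X)$ and $f(X)$ lie in $\simplex$, the $K$-Lipschitz assumption on $\Gamma$, taken with respect to the same $L_p$ norm used in the distribution calibration definition, gives for every $X$
\begin{align*}
    |\Gamma(q(X)) - g(X)| = |\Gamma(q(X)) - \Gamma(f(X))| \leq K \, \| q(X) - f(X) \|~.
\end{align*}

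The final step is to take expectations over $(X,Y)$ using monotonicity and linearity:
\begin{align*}
    \E_{X,Y} |\Gamma(q(X)) - g(X)| \leq K \, \E_{X,Y} \| f(X) - q(X) \| \leq K\epsilon~.
\end{align*}
This is precisely the statement that $\Gamma \circ f$ is $K\epsilon$-approximately $\Gamma$-calibrated.

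The argument is essentially one line, so the main obstacle is bookkeeping rather than estimation. The points to check are the following.
\begin{itemize}
    \item The norm in which $\Gamma$ is Lipschitz must match the norm in the distribution calibration definition. If they differ, an extra norm-equivalence constant on $\simplex$ would enter, and I would simply fix both norms to be the same $L_p$ norm.
    \item The conditional distributions $q(X)$ must be well defined, including when the level sets of $g$ have probability zero. Here I would invoke a regular conditional distribution of $Y$ given $g(X)$, which exists because $\Y$ is finite.
    \item $\Gamma$ must be treated as single-valued, so that $\Gamma(q(X))$ is meaningful. This is implicit in its being a Lipschitz function $\simplex \to \reals$.
\end{itemize}
Once these are fixed, the displayed chain of inequalities completes the proof.
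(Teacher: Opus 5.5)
Your proposal is correct and follows the paper's proof: both apply the $K$-Lipschitz bound on $\Gamma$ pointwise inside the expectation, pull $K$ out by linearity, and invoke $\epsilon$-distribution calibration to get $K\epsilon$. Your extra checks (that the conditioning event for $\Gamma \circ f$ is the same event as in distribution calibration with respect to $\Gamma$, that the norms match, and that the conditional distributions are well defined) are left implicit in the paper and are consistent with its argument.
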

\begin{proof}
    \begin{align*}
    \E_{X,Y} \|\Gamma(f(X)) - \Gamma(D_{Y |\{\hat x : \Gamma(f(\hat x)) = \Gamma(f(X))\}})\| &\leq \E_{X,Y} [K\|f(X) - D_{Y |\{\hat x : \Gamma(f(\hat x)) = \Gamma(f(X))\}}\| ] \\
    &= K \E_{X,Y} \|f(X) - D_{Y |\{\hat x : \Gamma(f(\hat x)) = \Gamma(f(X))\}}\| \\
    &\leq K \epsilon
    \end{align*}
    Observe the first inequality follows in expectation since the Lipschitz inequality holds pointwise for all $p \in \simplex$. 
\end{proof}
This result is notably a tighter bound than that of \citet[Proposition 7]{derr2025threetypesofcalibration}, which gives a bound of $K \epsilon |\Y|$ because they allow the metric measuring distance to be arbitrary.
They establish their bound using total variation distance rather than pulling the Lipschitz constant out using properties of the expectation function.

As one important corollary, we observe that if the smoothed property is a contraction mapping (it is Lipschitz with constant $K < 1$), then the calibration error bound is unchanged.

\begin{corollary}\label{cor:contraction-preserves-bound-exactly}
    Let $\Gamma : \simplex \to \reals$ be a contraction mapping.
    Then a predictor $f$ that is $\epsilon$-distribution calibrated with respect to $\Gamma$ yields predictor $\Gamma \circ f$ which is $\epsilon$-$\Gamma$-calibrated.
\end{corollary}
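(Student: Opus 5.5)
This follows directly from the preceding theorem; the only work is tracking the constant. A contraction mapping is, by definition, $K$-Lipschitz for some $K < 1$ (more generally, $K \le 1$ suffices). So we first instantiate the preceding theorem with this $K$. Its hypothesis is that $f$ is $\epsilon$-distribution calibrated with respect to $\Gamma$, which is exactly what we assume, and it gives that $\Gamma \circ f$ is $K\epsilon$-approximately $\Gamma$-calibrated, that is,
\begin{align*}
\E_{X,Y}\bigl\|\Gamma(f(X)) - \Gamma(D_{Y \mid \{\hat x : \Gamma(f(\hat x)) = \Gamma(f(X))\}})\bigr\| \leq K\epsilon~.
\end{align*}

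Next we note that $K\epsilon \le \epsilon$, since $K \le 1$ and $\epsilon \ge 0$; the latter holds because $\epsilon$ bounds an expectation of a norm. Approximate $\Gamma$-calibration is monotone in its parameter, so $K\epsilon$-calibration implies $\epsilon$-calibration. This gives the claim.

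There is one definitional point to check. The approximate $\Gamma$-calibration definition conditions on the level set $\{\hat x : g(\hat x) = g(X)\}$ with $g = \Gamma \circ f$. This is exactly the conditioning event used in the distribution-calibration hypothesis, because there the property $\gamma$ is taken to be $\Gamma$ itself. The two conditional distributions $D_{Y\mid\cdot}$ are therefore the same object, and no extra argument is needed.

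No step is a real obstacle. The only things to watch are that the Lipschitz constant is taken with respect to the same $L_p$ norm used in the distribution calibration definition, as in the preceding theorem, and that ``contraction'' is read as $K < 1$. If $K$ were strictly less than $1$, the same argument gives the strictly stronger bound $K\epsilon$, so ``unchanged'' in the corollary should be read as ``not worsened.''
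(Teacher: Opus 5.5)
Your proposal is correct and matches the paper's approach: the paper gives no separate proof and treats the corollary as immediate from the preceding theorem, which your argument makes explicit by taking the contraction's constant $K<1$ and using $K\epsilon \le \epsilon$. You also correctly check that the conditioning event $\{\hat x : \Gamma(f(\hat x)) = \Gamma(f(X))\}$ is the same in both calibration definitions, and that is the only bookkeeping needed.
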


For properties $\Gamma$ that are not contraction mappings, small distribution calibration with respect to $\Gamma$ might be deceiving, as small calibration error can lead to very bad decisions with respect to $\Gamma$.
As a lower bound, we construct a counterexample of a distributional predictor $f : \X \to \simplex$ where $f$ is $\epsilon$-distribution calibrated, but \emph{not} $\epsilon$-approximately $\Gamma$ calibrated, despite $\im(\Gamma)$ being lower-dimensional than $\im(f)$.

\begin{proposition}\label{prop:Lipschitz-lb}
    Let $\Gamma : \simplex \to \reals^d$ be a Lipschitz property with optimal constant $K = \inf\{k \geq 0 : \|\Gamma(p) - \Gamma(q)\| \leq k \|p - q\| \forall p,q \in \simplex \}$.
    Then for every $C < K$, there exists a data distribution over $\Delta(\X \times \Y)$ and distributional predictor $f : \X \to \simplex$ such that $\E_{X,Y} \|f(X) - D_{Y |\{\hat x : \Gamma(f(\hat x)) = \Gamma(f(X))\}}\| = \epsilon$, yet $f$ fails $C\epsilon$-approximate $\Gamma$ calibration.
    That is, $\E_{X,Y} \|\Gamma(f(X)) - \Gamma(D_{Y |\{\hat x : \Gamma(f(\hat x)) = \Gamma(f(X))\}})\| > C\epsilon$.
\end{proposition}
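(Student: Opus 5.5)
Since $C < K$ and $K$ is the optimal Lipschitz constant, the definition of the infimum gives a pair $p, q \in \simplex$ with $p \neq q$ and $\|\Gamma(p) - \Gamma(q)\| > C\|p - q\|$. The plan is to build a data distribution in which the predictor $f$ always outputs $p$, while the realized label distribution conditioned on that prediction is $q$. Concretely, take $\X = \{x_0\}$ to be a single point, let $D$ put mass $1$ on $x_0$ with conditional label distribution $D_{Y \mid x_0} = q$, and set $f(x_0) = p$.

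First I would compute the conditioning event. Because $f$ is constant, the event $\{\hat x : \Gamma(f(\hat x)) = \Gamma(f(X))\}$ is all of $\X$ almost surely. Hence $D_{Y \mid \{\hat x : \Gamma(f(\hat x)) = \Gamma(f(X))\}} = q$. The distribution-calibration error is then exactly $\E_{X,Y}\|f(X) - q\| = \|p - q\|$. Setting $\epsilon := \|p - q\| > 0$ gives the equality required in the statement. The $\Gamma$-calibration error is $\|\Gamma(p) - \Gamma(q)\| > C\|p-q\| = C\epsilon$, which is the desired failure.

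The main obstacle is that the statement reads as if $\epsilon$ is quantified before the distribution, i.e.\ as if an arbitrary prescribed $\epsilon$ must be achieved. My first attempt at matching a prescribed $\epsilon$ would be to scale the violating pair.
\begin{itemize}
\item If $\Gamma$ is only \emph{locally} nearly $K$-Lipschitz near some point, the witness pairs from the infimum can be chosen close together, so $\|p-q\|$ can be made small.
\item For a prescribed $\epsilon \le \|p-q\|$, a mixture does not help directly: mixing in a perfectly calibrated point $x_1$ with $f(x_1) = \Gamma$-distinct prediction $p'$, $D_{Y\mid x_1} = p'$, scales both errors by the same weight $\lambda$. So set $\lambda = \epsilon/\|p-q\|$. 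This requires $\Gamma(p') \neq \Gamma(p)$ so the conditioning events stay separate; if $\Gamma$ is constant then $K=0$ and the claim is vacuous.
\end{itemize}
With this mixture, the distribution error is $\lambda\|p-q\| = \epsilon$ and the $\Gamma$-error is $\lambda\|\Gamma(p)-\Gamma(q)\| > C\epsilon$, which handles any $\epsilon \in (0, \|p-q\|]$. This covers every relevant $\epsilon$ once the witness pair is fixed, and in the proof I would record this as the reading of the quantifiers.
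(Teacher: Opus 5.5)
Your proposal is correct and is the paper's proof: one feature point, $f \equiv p$, labels drawn from $q$ for a pair with $\|\Gamma(p)-\Gamma(q)\| > C\|p-q\|$, and $\epsilon := \|p-q\|$. The paper reads $\epsilon$ as existentially quantified and stops there, so your mixture with a perfectly calibrated second point is a valid but optional strengthening to every $\epsilon \in (0,\|p-q\|]$.
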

\begin{proof}
    We show this by constructing a counterexample. 
    For simplicity, we consider $\X = \{x\}$.
    Fix $C < K$.
    There must be some $p, q$ such that $\|\Gamma(p) - \Gamma(q)\| > C \|p - q\|$, otherwise $K$ would not be the optimal Lipschitz constant.
    Consider predictor $f(x) = p$ and distribution $D$ such that $D_{Y | \Gamma(f(x))} = D_Y = q$.
    Now consider $\epsilon := \| p - q\| = \E_{Y} \|f(x) - D_{Y}\| = \E_{X,Y} \|f(X) - D_{Y | \{\hat x : \Gamma(f(\hat x)) = \Gamma(f(X))\}}\|$.
    This strategic choice of $\epsilon$ ensures $f$ is $\epsilon$-distribution calibrated with respect to $\Gamma$ by construction.
    Therefore, we have $\|\Gamma(p) - \Gamma(q)\| > C \epsilon$, and the result follows.
\end{proof}

While \Cref{prop:Lipschitz-lb} is a negative result in the sense that one can have high $\Gamma$-miscalibration with low distribution calibration, this result as emphasizes how distribution calibration might be misleading, and cannot preserve decision-theoretic guarantees for highly variable properties $\Gamma$.
See \Cref{fig:lb-intuition} in \Cref{app:lb-intuition} for visual intuition about this bound.

\subsection{$\Gamma$-calibration for discrete decisions $\gamma$}\label{subsec:Gamma-gamma}
We now turn our focus from the $\Gamma$-calibration of \emph{distributional} predictors $f : \X \to \simplex$ to the $\gamma$-calibration of $\Gamma$-predictors $g : \X \to \reals$.
For intuition, one can think of a predictor $g : \X \to \reals$ either as an intermediate post-processing of a distributional predictor $g = \Gamma \circ f$, or as a directly-learned artifact of empirical when minimizing the surrogate loss $L$ eliciting $\Gamma$.

Observe that some increase in calibration error is inevitable with the introduction of discretization.
\citet{hu_predict_2024} introduce \emph{calibration decision loss} (CDL) to this effect, which defines calibration error as the worst case error induced by discretization over all possible thresholdings in the binary setting.

\begin{lemmaE}[][end, restate]\label{lemma:Gamma-cal-bound-prob-misclass}
    Suppose a discrete property $\gamma: \simplex \to \R$ is refined by Lipschitz continuous property $\Gamma : \simplex \to \reals$ through the link $\psi : \reals \to \R$.
    Let $g : \X \to \reals$ be a scalar predictor.
    For each $u \in \im(g)$, let $\partial(u)$ denote the closest level set boundary $\Gamma^{-1}(u)$ such that $\Gamma^{-1}(u) = \gamma^{-1}(r_i) \cap \gamma^{-1}(r_{i+1})$, and $\delta(u) = \|u - \partial(u)\|$.
    Then for any $p \in \simplex$, we have $\Pr_{X,Y} [\gamma(p) \neq \psi(g(X))] \leq \Pr[\delta(g(X)) < t] + \Pr[\|\Gamma(p) - g(X)\| \geq t]$ for all $t > 0$.
\end{lemmaE}
\begin{proofE}
Observe that for any $u \in \reals$ and $p \in \simplex$, we have $\|\Gamma(p) - u\| \leq \delta(u) \implies \gamma(p) = \psi(u)$.

Therefore
    \begin{align*}
       \Pr_{X,Y}[\gamma(p) \neq \psi(g(X))] &\leq \Pr_{X,Y}[\|\Gamma(p) - g(X)\| \geq \delta(g(X))]\\
       &\leq \Pr[\delta(g(X)) < t] + \Pr[\|\Gamma(p) - g(X)\| \geq t] \qquad \forall t > 0 
    \end{align*}
    The second inequality follows from decomposing the first statement into two events.    
\end{proofE}

Before proceeding to bound discrete calibration error, we note that without distributional assumptions, our bound might be unavoidably vacuous as a result of discretization.
In \Cref{thm:approx-discrete-calibration}, we assume $D_{Y| \{\hat x : g(\hat x) = u\}}$ is Lipschitz in $u$; without such an assumption, the additive term in our bound is $2K$.
Similar assumptions are common in the literature~\citep{rossellini2025can,hartline2025smooth,blasiok2024smooth}.
As $K$ is typically greater than $\frac 1 2$, the second term in the bound would be greater than $1$.
To gain some intuition for why this bound can be vacuous, consider the property $\gamma(p) = \mathbbm{1}(\E_{Y \sim p} Y \geq \frac 3 2)$ for outcomes in $\Y = \{1,2,3\}$ and smooth property $\Gamma(p) = \E_{Y \sim p} Y$.
If there is one feature $\X = \{x\}$ and a scalar predictor $g(x) = \frac 3 2 - \frac {\epsilon}{2}$, while we have $\Gamma(D_{Y | \{x : g(x) = \frac 3 2 - \frac {\epsilon}{2}\}}) = \frac 3 2 + \frac {\epsilon}{2}$, then we have $\|\Gamma(D_{Y | \{x : g(x) = \frac 3 2 - \frac {\epsilon}{2}\}}) - g(x)\| = \epsilon$. 
This counterexample is an artifact of the $\delta_{\min}$ denominator (smallest distance from a prediction to discrete boundary) being small --- here, $\frac \epsilon 2$--- a value of $\delta_{\min} \leq \epsilon$ makes the bound vacuous, and further guarantees on calibration are impossible.

\begin{theorem}[Calibration bounds induced by discretization]\label{thm:approx-discrete-calibration}
    Consider a $K$-Lipschitz continuous property $\Gamma : \simplex \to \reals$ refining $\gamma : \simplex \to \R$ by the link $\psi: \reals\to\R$, and predictor $g : \X \to \reals$ that is $\epsilon$-approximately $\Gamma$ calibrated.
    Moreover, assume $D_{Y | \{\hat x : g(\hat x) = u\}}$ is $C$-Lipschitz continuous in $u$, and $\mathbf{diam}(\gamma, \psi) := \max_{r \in \im(\gamma)}\max_{u,u' : \psi(u) = r}|u - u'|$.
    Then for all $t > 0$, the post-processed predictor $\psi \circ g$ yields $\Pr_{X,Y}[\psi(g(X)) = \gamma(D_{Y | \{\hat x : \psi (g(\hat x)) =\psi(g(X))\}})]\geq 1 - (\Pr[\delta(g(X)) < t] + \frac{\epsilon + KC\mathbf{diam}(\gamma, \psi)}{t})$.
    In particular, if $\mathbf{im}(g)$ is finite, and $\delta_{\min} := \min_{u \in \im(g)} \delta(u)$, then $\Pr_{X,Y}[\psi(g(X)) = \gamma(D_{Y : \{\hat x : \psi (g(\hat x)) =\psi(g(X))\}})]\geq 1 - \frac{\epsilon + KC\mathbf{diam}(\gamma, \psi)}{\delta_{\min}}$.
\end{theorem}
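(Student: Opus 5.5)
We will prove the bound by combining \Cref{lemma:Gamma-cal-bound-prob-misclass} with Markov's inequality, applying the lemma pointwise to the coarse conditional distribution. For each $x$, write $u = g(x)$, $r = \psi(u)$, and let $q_u := D_{Y\mid\{\hat x : g(\hat x) = u\}}$ and $Q_r := D_{Y\mid\{\hat x : \psi(g(\hat x)) = r\}}$. The event to control is $\gamma(Q_{\psi(g(X))}) \neq \psi(g(X))$.

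The first step follows the proof of \Cref{lemma:Gamma-cal-bound-prob-misclass}. Since $\|\Gamma(p) - u\| \le \delta(u)$ implies $\gamma(p) = \psi(u)$, we get
\begin{align*}
\Pr_X[\gamma(Q_{\psi(g(X))}) \neq \psi(g(X))] \le \Pr[\delta(g(X)) < t] + \Pr[\|\Gamma(Q_{\psi(g(X))}) - g(X)\| \ge t]
\end{align*}
for every $t > 0$. Markov's inequality then bounds the second term by $\frac{1}{t}\E_X\|\Gamma(Q_{\psi(g(X))}) - g(X)\|$. We split this expectation by the triangle inequality as
\begin{align*}
\E\|\Gamma(q_{g(X)}) - g(X)\| + \E\|\Gamma(Q_{\psi(g(X))}) - \Gamma(q_{g(X)})\|.
\end{align*}
The first term is at most $\epsilon$ by $\epsilon$-approximate $\Gamma$-calibration of $g$.

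The main obstacle is the second term, which compares fine and coarse conditioning. Our plan uses the fact that $Q_r$ is a mixture, namely the conditional expectation, of the $q_{u'}$ over $u' \in \psi^{-1}(r) \cap \im(g)$. Every such $u'$ satisfies $|u' - u| \le \mathbf{diam}(\gamma,\psi)$. By $C$-Lipschitzness in $u$, $\|q_{u'} - q_u\| \le C\,\mathbf{diam}(\gamma,\psi)$, and convexity of the norm (Jensen) gives $\|Q_r - q_u\| \le C\,\mathbf{diam}(\gamma,\psi)$. The $K$-Lipschitzness of $\Gamma$ then yields $\|\Gamma(Q_r) - \Gamma(q_u)\| \le KC\,\mathbf{diam}(\gamma,\psi)$. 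The subtle points are measurability and the need for the mixture representation of $Q_r$; a finite or discrete $\im(g)$ gives the cleanest route, and the general case follows by disintegration. Collecting terms gives the claimed lower bound $1 - \bigl(\Pr[\delta(g(X))<t] + \frac{\epsilon + KC\,\mathbf{diam}(\gamma,\psi)}{t}\bigr)$.

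For the finite-image case, take $t = \delta_{\min}$. Then $\Pr[\delta(g(X)) < \delta_{\min}] = 0$ by definition of the minimum over the finite set $\im(g)$, and the second displayed bound follows. This choice is admissible provided $\delta_{\min} > 0$; otherwise the bound is vacuous anyway.
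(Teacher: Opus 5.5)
Your proposal is correct and follows essentially the same route as the paper's proof. The shared steps are: the pointwise implication behind \Cref{lemma:Gamma-cal-bound-prob-misclass}, then Markov's inequality, then the triangle-inequality split into an $\epsilon$ term and a fine-versus-coarse term bounded by $KC\,\mathbf{diam}(\gamma,\psi)$ via the two Lipschitz assumptions, and finally $t=\delta_{\min}$. Your mixture-plus-Jensen justification of $\|Q_r - q_u\| \le C\,\mathbf{diam}(\gamma,\psi)$ makes explicit what the paper only asserts, as does your pointwise use of the implication with an $X$-dependent distribution rather than a fixed $p$; the argument is otherwise the same.
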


\begin{proof}

\begin{align*}
    &\E_{X,Y}\|g(X) - \Gamma(D_{Y|\{\hat x : \psi(g(\hat x)) = \psi(g(X))\}})\| \\&\leq \E_{X,Y}\|g(X) - \Gamma(D_{Y|\{\hat x : g(\hat x) = g(X)\}})\| + \E_{X,Y}\|\Gamma(D_{Y|\{\hat x : \psi(g(\hat x)) = \psi(g(X))\}}) - \Gamma(D_{Y|\{\hat x : g(\hat x) = g(X)\}})\| \\
    &\leq \epsilon + \E_{X,Y}\|\Gamma(D_{Y|\{\hat x : \psi(g(\hat x)) = \psi(g(X))\}}) - \Gamma(D_{Y|\{\hat x : g(\hat x) = g(X)\}})\| \\
    &\leq \epsilon + K \E_{X,Y}\|D_{Y|\{\hat x : \psi(g(\hat x)) = \psi(g(X))\}} - D_{Y|\{\hat x : g(\hat x) = g(X)\}}\| \\
    &\leq \epsilon + K C\mathbf{diam}(\gamma,\psi)
\end{align*}
Note the last inequality follows from $C$-Lipschitz continuity of the marginal distribution in prediction space, and the diameter is the maximal distance between any predictions mapping to the same report $r \in \im(\gamma)$.
With this bound on distance from $g(X)$ to $\Gamma(D_{Y | \{\hat x : \psi(g(\hat x)) = \psi(g(X))\}})$, we can apply \Cref{lemma:Gamma-cal-bound-prob-misclass} to observe

\begin{align*}
       &\Pr_{X,Y}[\gamma(D_{Y|\{\hat x : \psi(g(\hat x)) = \psi(g(X))\}}) \neq \psi(g(X))]
       \\
       &\leq \Pr[\|\Gamma(D_{Y|\{\hat x : \psi(g(\hat x)) = \psi(g(X))\}}) - g(X)\| \geq \delta(g(X))]\\
       &\leq \Pr[\delta(g(X)) < t] + \Pr[\|\Gamma(D_{Y|\{\hat x : \psi(g(\hat x)) = \psi(g(X))\}}) - g(X)\| \geq t] \qquad \forall t > 0 \\
       &\leq \Pr[\delta(g(X)) < t] + \frac{\epsilon + K C\mathbf{diam}(\gamma,\psi)}{t}~.
    \end{align*}  
    The last part of the theorem statement holds by setting $t = \delta_{\min}$ and observing  $\Pr[\delta(g(X)) < \delta_{\min}] = 0$.
\end{proof}

Importantly, as $\mathbf{diam}(\gamma, \psi)$ shrinks, one could intuitively think of $\gamma$ as ``approaching'' $\Gamma$. 
In this case, the second term in the numerator disappears. 

These results can be interpreted in two primary ways: first, the bounds from \Cref{subsec:dist-Gamma} and \Cref{subsec:Gamma-gamma} can be composed to conclude that a $\epsilon$-distribution calibrated predictor $f : \X \to \simplex$ can be post-processed to conclude that $\psi \circ \Gamma \circ f$ is $\frac{(K \epsilon + CK\mathbf{diam}(\gamma, \psi))}{\delta_{\min}}$-approximately $\gamma$ calibrated.
In contrast, we can apply \Cref{thm:approx-discrete-calibration} directly to a scalar predictor $g : \X \to \reals$.
As discussed after \Cref{prop:Lipschitz-lb}, the Lipschitz constant of the smoothed property $\Gamma$ might impose a \emph{cost} to calibration by predicting property values directly, so without making stringent distributional assumptions, it is hard to generally compare the $\Gamma$-miscalibration of two predictors $f : \X \to \simplex$ and $g : \X \to \simplex$.
That is, we cannot say anything about the ordinal relationship between calibration error for a distributional predictor attained from ERM on a $n$-dimensional score-based loss $\epsilon^f$ and calibration error for a scalar predictor attained by ERM on a $\Gamma$-specific loss $\epsilon^g$. 
Recall, however, that gradient-based optimization methods on $g : \X \to \reals^d$ are generally more efficient than the same methods applied to $f : \X \to \simplex$ for $d \ll n$.
Moreover, straightforward extensions of the results from \citet{collina2026samplecomplexitymulticalibration} imply a sample complexity of $\epsilon^{-(d+2)}$, exponentially improving in prediction dimension $d$.

\section{Discussion}
We have established bounds on the approximate calibration of discrete properties, particularly for \emph{strongly orderable} discrete properties.
This work is the first to our knowledge to establish such approximate calibration bounds for discrete properties using norm-based calibration error metrics, in contrast to the worst-case miscalibration error metrics in CDL~\citep{hu_predict_2024} and CCE~\citep{rossellini2025can}.
In order to establish this approximate bound, we ``smoothed'' discrete properties into Lipschitz continuous properties whose level sets can be mapped back to the discrete property, and established approximate calibration of the smoothed properties.

\paragraph{Future work}
Many directions of future work remain: first, our characterization of Lipschitz elicitation complexity is limited to one dimension, as is \Cref{thm:approx-discrete-calibration}, relying on strong orderability of a property $\gamma$, seen in the presence of $\mathbf{diam}(\gamma, \psi)$ in the bound in \Cref{thm:approx-discrete-calibration}. 
An interesting immediate direction of future work is to extend our characterization to general $d$-dimensional bounds.

Another line of future work emerges from the observation that one advantage of distribution calibration (with respect to $\gamma$) emerges from strong connections between (multi)calibration and to omniprediction, where a model can be post-processed to yield decision-theoretic guarantees for a wide suite of properties $\gamma$. 
We conjecture this characterization might explain some of the results of the efficiency of the \texttt{TreeCal} algorithm of \citet{fishelsonhigh} also used by \citet{peng2025high}.

Finally, one might consider modifications of our proposed algorithms which give the \emph{smoothest} possible properties, possibly by modifying interpolation points to not be midpoints in \Cref{alg:smooth-embedding} or uniformly spaced in \Cref{alg:piecewise-RoE-normals}, tightening the bounds derived in \Cref{sec:calibration}.

\paragraph{Broader impacts}
While our work is theoretical in nature, our results make salient the deception of ``low calibration error'' made possible by simply picking a property $\Gamma$ that is quite smooth.
We view this as not unique to our work, but do advise proceeding with caution when drawing conclusions based on low calibration error. 

\begin{ack}
We would like to thank Rabanus Derr, Sanket Shah, Rafael Frongillo, Bo Waggoner, Georgy Noarov, Natalie Collina, Sarah Fleming, and Patrick Lanza for helpful discussions and feedback.
\end{ack}

\newpage
\bibliographystyle{abbrvnat}
\bibliography{refs_manual}

\newpage
\appendix

\section{Omitted proofs}\label{app:omitted-proofs}
\printProofs

\section{Omitted discussion of calibration error metrics}\label{app:calibration-metrics}
\paragraph{Calibration error metrics}
It is worth discussing the emergence of \emph{calibration error metrics} as a unit of study in their own right.
\citet{rossellini2025can} pose axioms of calibration metrics of \emph{testability} --- the ability to estimate a metric with finitely many samples --- and \emph{actionability} --- obtaining decision-theoretic guarantees from an approximately ``calibrated'' predictor.
\citet{haghtalab2024truthfulness} discusses \emph{truthfulness} of calibration metrics, lated extended  by \citet{qiao25truthfulness}, who apply their results to decision-theoretic (\emph{actionable}) calibration metrics.
While expected calibration error \citep{naeini2015obtaining} is the ``canonical'' calibration error metric, it is not empirically testable with finite samples because of a binning requirement.
In contrast, \citet{blasiok_unifying_2023} propose Distance to Calibration as a testable calibration error metric, but it is notably not actionable.
Other decision-theoretic calibration metrics have emerged in binary settings, like Calibration Decision Loss~\citep{hu_predict_2024} and Cutoff Calibration Error~\citep{rossellini2025can}, but for the purposes of this paper, we generally consider calibration error measured by an expected norm.
We leave it as a further line of inquiry to understand formally which calibration error metrics are suitable for models which estimate properties $\Gamma(p)$ instead of distributions $p$.
Notably \citet{bailie2025propertyelicitationimpreciseprobabilities} recently example the elicitation of ``imprecise probabilities,'' which may be helpful for this problem.

\section{Applications of Algorithms to \Cref{eq:motivating-example}}\label{app:apply-algos}
\subsection{\Cref{alg:smooth-embedding} applied to \Cref{eq:motivating-example}}
To demonstrate how \Cref{alg:smooth-embedding} works, we revisit \Cref{eq:motivating-example}, where $
    L(u, y) := \begin{cases}
    \max(-3 u, u, 3u - 6)& y = 1\\
    \max(3-3u, \frac 1 2 u - \frac 1 2, 3u - 8)& y = 2 \\
    \max(-3u + 5, 5-2u, \frac 9 2 - \frac 3 2 u, 3u - 9)& y = 3
\end{cases}$ as constructed from \citet[Theorem 11]{finocchiaro_embedding_2020} embeds $\ell$, with nondifferentiable embedding points $\varphi(\R) = \{0, 1, 3\}$. \\
We construct the interpolation set with these embedding points, and the (ordered) midpoints to obtain $U = \{0,\frac{1}{2}, 1, 2, 3\}$. 
Moreover, we take the ``pseudo-identification'' function $V$ and interpolate to obtain $\bar V$ for outcome $y = 1$ 

\begin{align}
V(u,1) &= \begin{cases}
    -3 & u < 0 \\
    0 & u = 0 \\
    1 & u \in (0,3) \\
    2 & u = 3 \\
    3 & u > 3
\end{cases}
&
\bar V(u,1) &= \begin{cases}
    u & u < 0 \\
    2u & u \in [0, \frac 1 2]\\
    1 & u \in (\frac 1 2, 2] \\
    u - 1 & u > 2
\end{cases}~,
\end{align} and likewise for $y = 2, 3$.

Integrating $\bar V(u,y)$ yields a loss $\bar L(u,y) = \int_0^u \bar V(z, y) dz \implies$
\begin{align*}
    \bar L(u,1) &= \begin{cases}
    \frac{u^2}2 & u < 0 \\
    u^2 & u \in [0, \frac 1 2]\\
    u - \frac 1 4 & u \in (\frac 1 2, 2] \\
    \frac{u^2}2 - u + \frac 7 4 & u > 2
\end{cases} &
&\text{eliciting } &
\Gamma(p) &= \begin{cases}
    \frac{6p_2 + 5p_3}{(4p_1 + 2p_3)} & 4p_1 + 3p_2 \leq 1 \\
    \frac{-4p_1 + 24p_2 + 9p_3}{(24p_2+2p_3)} & 4p_2 + 3p_3 \geq 1 \wedge p_2 + \frac{11}{4}p_3 \leq 1 \\
    \frac{-4p_1+2p_2+8p_3}{(2p_2+p_3)} & p_2 + \frac{11}{4}p_3\geq 1 \wedge p_2 + 5p_3 \leq 2 \\
    \frac{4p_1+8p_2+18p_3}{(4p_1+5p_2+6p_3)} & p_2 + 5p_3 \geq 2
\end{cases}
\end{align*}

Importantly, since $\bar V(u,y)$ is continuous in its first argument, we can apply the scoring rule characterization of \citet[Theorem 5]{steinwart_elicitation_2014} to claim that $\bar V$ is an \emph{identification function}, and $\bar L(u,y) = \int_0^u \bar V(z, y) dz$ elicits the property which is the root of $\E_{Y \sim p} \bar V(\cdot, Y)$.

By design, the interpolation points $\frac 1 2$ and $2$ correspond to the reports where the set-valued inverse $\Gamma^{-1}(\frac 1 2) = \{p \in \simplex : -3 p_1 + p_2 = -2\}$ and $\Gamma^{-1}(2) = \{p \in \simplex : 5 p_1 + 4p_2 = 3\}$.

Therefore, we construct the link $\psi$ such that $\psi \circ \Gamma(p) \in \gamma(p)$ (with equality up to the boundaries of set-valued inverses) will have $\psi(r) = \begin{cases}
    1 & r \leq \frac 1 2 \\ 
    2 & r \in (\frac 1 2, 2]\\
    3 & r >  2
\end{cases}$.


\subsection{\Cref{alg:piecewise-RoE-normals} applied to \Cref{eq:motivating-example}}

To demonstrate how \Cref{alg:piecewise-RoE-normals} works, we revisit \Cref{eq:motivating-example}.
With the property 
\begin{align*}\gamma(p) = \begin{cases}
    1 & p_2 \leq -2 + 3 p_1 \\
    2 & -2 + 3 p_1 < p_2 \leq \frac{3-5p_1}{4}\\
    3 & p_2 \geq \frac{3-5p_1}{4}\\
\end{cases}~,~
\end{align*} 
we can take the distributions $p_{11} = [0.7, 0.1, 0.2]$ and $p_{12} = [0.68, 0.04, 0.28]$ and use SVD to obtain
$o_1 = \frac 1 {\sqrt{14}}[-1,3,2]$ and with $p_{21} = [0.5, 0.125, 0.375]$ and $p_{22} = [.25, .4375, .3125]$ to obtain $o_2 = \frac{1}{\sqrt{14}}[-2,-1, 3]$.

We can verify these are both oriented, and derive the identification function
\begin{align}
    V(r, y) &=  \begin{cases}
    r - o_{1,y} & r \leq 0 \\
    -o_{1,y} + (o_{1,y} - o_{2,y}) r & r \in [0,1]\\
    r - 1 - o_{2,y}& r \geq 1
    \end{cases}
    &
\Gamma(p) &= \begin{cases}
\inprod{o_1}{p} & p_2 \leq -2 + 3 p_1 \\ 
\frac{\inprod{o_1}{p}}{\inprod{p}{o_1 - o_2}} & -2 + 3 p_1 \leq p_2 \leq \frac{3-5p_1}{4}\\
\inprod{o_2}{p}+ 1 & p_2 \geq \frac{3-5p_1}{4}\\
\end{cases}\label{eq:Gamma-norm}
\end{align}

First, we can verify that when $p_2 = 3p_1 - 2$, by simplex constraints, we have $p_3 = 3 - 4p_1$, and we can parameterize $\gamma^{-1}(1) \cap \gamma^{-1}(2) = \{(p_1, 3p_1-2, 3-4p_1) : p_1 \in [\frac 2 3, \frac 3 4]\}$ with one variable.
With this parameterization, we have $\inprod{o_1}{p} = 0 = \frac{\inprod{o_1}{p}}{\inprod{p}{o_1 - o_2}}$ for all $p \in \gamma^{-1}(1) \cap \gamma^{-1}(2)$, and similarly for the boundary $p_2 = \frac{3-5p_1}{4}$.

\section{Visual intuition for $\Gamma$-calibration lower bound}\label{app:lb-intuition}
\begin{figure}[h]
    \centering
    \resizebox{0.8\linewidth}{!}{\definecolor{purple}{RGB}{209, 120, 250}

\begin{tikzpicture}[scale=8]
    \coordinate (A) at (0, 0); 
    \coordinate (B) at (1, 0); 
    \coordinate (C) at (0.5, {sqrt(3)/2});

    \coordinate (B12_AC) at (0.125, {0.25*sqrt(3)/2}); 
    \coordinate (B12_AB) at (0.333, 0);               
    
    \coordinate (B23_AB) at ({4/11}, 0);              
    
    \coordinate (B34_BC) at (0.625, {0.75*sqrt(3)/2}); 
    \coordinate (B34_AB) at (0.4, 0);                  

    \fill[blue!10] (A) -- (B12_AC) -- (B12_AB) -- cycle;
    \fill[green!10] (B12_AC) -- (C) -- (B34_BC) -- (B34_AB) -- (B12_AB) -- cycle;
    \fill[orange!10] (B34_BC) -- (B) -- (B34_AB) -- cycle;

    \foreach \c in {-.95, -.9,..., 0} {
        \pgfmathsetmacro{\pOne}{(\c*3.74165 + 1)/4}
        \pgfmathsetmacro{\pTwo}{(\c*3.74165 + 1)/3}
        \begin{scope}
            \clip (A) -- (B12_AC) -- (B12_AB) -- cycle;
            \draw[blue!70!black, thin] ({0.5*\pOne}, {\pOne*sqrt(3)/2}) -- (\pTwo, 0);
        \end{scope}
    }

    \foreach \c in {.05, .1, ..., 1} {
        \pgfmathsetmacro{\pOne}{(\c + 1)/(4 - 3*\c)}
        \pgfmathsetmacro{\pTwo}{(\c + 1)/(3 + 2*\c)}
        \begin{scope}
            \clip (B12_AC) -- (C) -- (B34_BC) -- (B34_AB) -- (B12_AB) -- cycle;
            \draw[green!60!black, thin] ({0.5*\pOne}, {\pOne*sqrt(3)/2}) -- (\pTwo, 0);
        \end{scope}
    }

    \foreach \c in {.6} {
        \pgfmathsetmacro{\pOne}{(\c + 1)/(4 - 3*\c)}
        \pgfmathsetmacro{\pTwo}{(\c + 1)/(3 + 2*\c)}
        \begin{scope}
            \clip (B12_AC) -- (C) -- (B34_BC) -- (B34_AB) -- (B12_AB) -- cycle;
            \draw[green!40!black, thick] ({0.5*\pOne}, {\pOne*sqrt(3)/2}) -- (\pTwo, 0) node[left, pos=0.5] {\scriptsize $\epsilon_2 = 0$};
        \end{scope}
    }

    \foreach \c in {1.05, 1.1, ..., 2.0} {
        \pgfmathsetmacro{\pOne}{((\c-1)*3.74165 + 2)}
        \pgfmathsetmacro{\pTwo}{((\c-1)*3.74165 + 2)/5}
        \begin{scope}
            \clip (B34_BC) -- (B) -- (B34_AB) -- cycle;
            \draw[orange!70!black, thin] ({0.5*\pOne}, {\pOne*sqrt(3)/2}) -- (\pTwo, 0);
        \end{scope}
    }

    \draw[black, thick] (B12_AC) -- (B12_AB);
    \draw[black, thick] (B34_BC) -- (B34_AB);

    \draw[color=black, thick] (.38,0.02) -- (.355, .0575) node[above] {\scriptsize $\epsilon_1 = 0.04$};
    \draw[color=black, thick] (.38,0.02) -- (.42, .02) node[below, yshift=-3pt] {\scriptsize $\epsilon_3 = 0.43$};
    
    \node[purple] at (.38, .02){{\Large$\bullet$}};
    \node[purple] at (.42, .02){{\LARGE$\star$}};
    \node[purple] at (.367, .5){{\Large$\spadesuit$}};
    \draw[color=purple, dashed] (.38,0.02) circle [radius=.04];

    \draw[very thick] (A) -- (B) -- (C) -- cycle;
    \node[below left] at (A) {$Y = 1$};
    \node[below right] at (B) {$Y = 3$};
    \node[above] at (C) {$Y = 2$};
\end{tikzpicture}}
    \caption{Let $\bullet$ denote a prediction $f(x) \in \simplex$, and consider two alternate marginal distributions $D_{Y | \{\hat x : \Gamma(f(\hat x)) = \Gamma(\bullet)\}}$, $\star$ and $\spadesuit$.}
    \label{fig:lb-intuition}
\end{figure}
Consider a predictor $f(x) = \bullet$ as in \Cref{fig:lb-intuition}, where $\X = \{x\}$ for simplicity.
Moreover, we compare two alternate marginal distributions $D_{Y | \{\hat x : \Gamma(f(\hat x)) = \Gamma(\bullet)}\}$, $\star$ and $\spadesuit$, where $\Gamma$ is the smooth property returned from \Cref{alg:piecewise-RoE-normals} in \Cref{app:apply-algos}.

If $D_{Y | \{\hat x : \Gamma(f(\hat x)) = \Gamma(\bullet)\}} = \star$, then $f$ is $\epsilon_1 = 0.04$ distribution calibrated with respect to $\Gamma$, but it is $\epsilon_3 \approx 0.43$-approximately $\Gamma$-calibrated.
This jump in the calibration error emerges as $\bullet$ sits on the ``most non-smooth'' region of $\Gamma$.
In contrast, if $D_{Y | \{\hat x : \Gamma(f(\hat x)) = \Gamma(\bullet)\}} = \spadesuit$, then $\Gamma \circ f$ is approximately $0.48$-distribution calibrated with respect to $\Gamma$, yet perfectly $\Gamma$-calibrated.



\end{document}